\newcommand{\justin}[1]{\ifthenelse{\boolean{include-notes}}{\textcolor{orange}{\textbf{Jaime:} #1}}{}}
\newcommand{\princeton}[1]{\ifthenelse{\boolean{include-notes}}{\textcolor{orange}{#1}}{}}
\newcommand{\p}[1]{\smallskip \noindent \textbf{{#1}.}}
\definecolor{claude_color}{HTML}{F89E62}
\definecolor{deepseek_color}{HTML}{78B6E8}
\definecolor{o3_mini_color}{HTML}{6CD5A1}
\definecolor{red_color}{HTML}{E13B55}
\definecolor{prompt_color}{HTML}{71502B}
\newcommand{\mcal}[1]{\mathcal{#1}}
\newcommand{\mbb}[1]{\mathbb{#1}}
\newcommand{\T}{\mathsf{T}}
\newcommand{\Expect}{\ensuremath{\mathbb{E}}}
\newcommand*{\der}[1]{\ensuremath{\mathrm{d}#1\ }}
\newcommand*{\algname}{\mbox{S-QUBED}\xspace}
\newcommand{\longdash}[1][2em]{%
  \makebox[#1]{$\m@th\smash-\mkern-7mu\cleaders\hbox{$\mkern-2mu\smash-\mkern-2mu$}\hfill\mkern-7mu\smash-$}}
\newcommand{\omitskip}{\kern-\arraycolsep}
\author[1*]{Zhiting Mei}
\author[1*]{Ola Shorinwa}
\author[1]{Anirudha Majumdar}
\affiliation[1]{Princeton University}
\begin{document}

\title{
{
\LARGE
How Confident are Video Models? 
} \\
\Large
Empowering Video Models to Express their Uncertainty
}

\abstract{
    Generative video models demonstrate impressive text-to-video capabilities, spurring widespread adoption in many real-world applications. However, like large language models (LLMs), video generation models tend to \emph{hallucinate}, producing plausible videos even when they are factually wrong. Although uncertainty quantification (UQ) of LLMs has been extensively studied in prior work, no UQ method for video models exists, raising critical safety concerns.
To our knowledge, this paper represents the first work towards quantifying the uncertainty of video models. 
We present a framework for uncertainty quantification of generative video models, consisting of:
(i) a metric for evaluating the calibration of video models based on robust rank correlation estimation with no stringent modeling assumptions; (ii) a black-box UQ method for video models (termed \textbf{\algname}), which leverages latent modeling to rigorously decompose predictive uncertainty into its aleatoric and epistemic components; and (iii) a UQ dataset to facilitate benchmarking calibration in video models.
By conditioning the generation task in the latent space, we disentangle uncertainty arising due to vague task specifications from that arising from lack of knowledge. Through extensive experiments on benchmark video datasets, we demonstrate that \algname computes calibrated total uncertainty estimates that are negatively correlated with the task accuracy and effectively computes the aleatoric and epistemic constituents.

}

\keywords{
Video Models, Uncertainty Quantification, Trustworthy Generative Models. 
}

\website{
https://s-qubed.github.io  %
}
{
s-qubed.github.io   %
}

\code{
https://github.com/irom-princeton/s-qubed %
}
{
github.com/irom-princeton/s-qubed  %
}

\maketitle

\begin{figure}[th]
    \vspace{5ex}
    \centering
    \includegraphics[width=\linewidth]{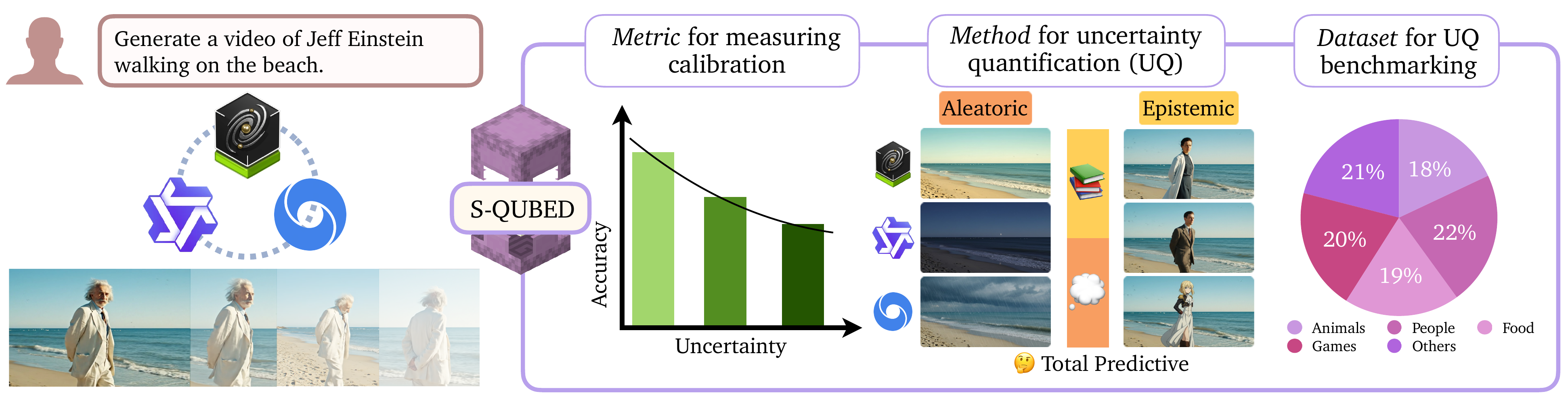}
    \caption{Video models are unable to express their uncertainty, posing a critical limitation especially in tasks where they lack requisite knowledge. Here, the video model generates an inaccurate video (showing Albert Einstein), when prompted to generate a video of Jeff Einstein. To this end, we introduce a \textit{metric} for evaluating the calibration of video models, a \textit{calibrated uncertainty quantification method} (\algname) which uses latent modeling to disentangle aleatoric and epistemic uncertainty, and a \textit{UQ dataset} for benchmarking calibration. 
    }
    \label{fig:anchor}
\end{figure}

\section{Introduction}
\label{sec:introduction}

Recent advances in video generation models have led to huge strides in their capabilities
\citep{deepmind_veo3_techreport,
nvidia2025cosmosworldfoundationmodel}. 
However, current text-to-video models tend to hallucinate, generating videos misaligned with the user intention, or disobeying physical laws. 
Despite this important limitation, 
existing video models are unable to express their own uncertainties, unlike LLMs, posing a crucial safety concern.
We illustrate hallucinations in video models in \Cref{fig:anchor}. When prompted to generate a video of Jeff Einstein walking on a beach, the video model generates a video of Albert Einstein, an entirely different person, without expressing any doubt in its output.
We aim to address this critical challenge by empowering video models to express their uncertainty.

Specifically, we propose a framework for uncertainty quantification of video models, consisting of three fundamental components:
First, we introduce a \emph{metric} for  evaluating the calibration of video models that directly assesses the alignment of the uncertainty estimates with the accuracy of the video generation task. Our metric estimates the rank correlation between uncertainty and task accuracy to measure the calibration error. 

Second, we derive \emph{\algname} (Semantically-Quantifying Uncertainty with Bayesian Entropy Decomposition), a black-box uncertainty quantification method for video generation models, preserving amenability to the ever-increasing set of closed-source video models.
Our key insight is to quantify uncertainty with latent modeling, enabling the rigorous decomposition of predictive uncertainty into its aleatoric and epistemic components.
By mapping the input text prompt to a latent space, 
\algname effectively distinguishes between uncertainty arising from ambiguous prompts and uncertainty arising from the model's lack of knowledge.
We demonstrate the calibration of \algname's estimates across a broad variety of video generation tasks.

Third, we curate a \textit{UQ dataset} of about $40$K videos across diverse tasks to facilitate benchmarking UQ methods for video models. We generate the data using the open-source model Cosmos-Predict2~\citep{nvidia2025cosmosworldfoundationmodel}. 
We hope that the dataset drives research on uncertainty quantification of video models.

\section{Related Work}
\label{sec:related_work}

\smallskip
\noindent\textbf{Uncertainty Quantification in Deep Learning.}
Deep neural networks (DNNs) are generally difficult to interpret \citep{li2022interpretable}, motivating the development of UQ methods to examine the trustworthiness of their predictions \citep{abdar2021review}. UQ methods in deep learning can be broadly categorized into: \emph{training-free} and \emph{training-based} methods, which constitute a majority of existing work. Training-free methods estimate uncertainty without modifying the model's architecture, training algorithm, or dataset, e.g., via perturbation techniques \citep{liu2024novel}, dropout injection \citep{loquercio2020general, ledda2023dropout}, and test-time data augmentation \citep{ayhan2018test, wu2024posterior}. In contrast, training-based methods impose specific architectural design choices to enable uncertainty quantification using Bayesian Neural Networks (BNN) and can be further classified into three categories: (i) variational inference, (ii) Monte-Carlo Dropout, and (iii)~Deep Ensemble methods. Assuming that the parameters (weights) of learned models are random variables, BNN methods \cite{} apply Bayes' rule to estimate a posterior distribution over these parameters given a prior distribution. However, the exact application of Bayes' rule is typically intractable, giving rise to approximation techniques, e.g., variational inference \citep{zhang2018advances}, which approximates the posterior distribution using a parametric distribution; Monte-Carlo Dropout \cite{gal2016dropout}, which samples from the posterior distribution by zeroing-out some weights; and Deep Ensembles \citep{lakshminarayanan2017simple}, which train multiple independent models to represent the posterior distribution. Despite their success, traditional UQ methods in deep learning are computationally expensive, limiting their applications in large generative models, e.g., large language models (LLMs) and vision-language models (VLMs). UQ methods for LLMs/VLMs generally leverage internal activations of these models, or utilize similarity-based metrics or natural-language inference techniques for more efficient UQ (see \citep{shorinwa2025survey} for a detailed discussion).

\smallskip
\noindent\textbf{Uncertainty Quantification in Generative Image/Video Models.}
Unlike DNNs and LLMs, UQ of generative image/video models has been relatively underexplored \citep{franchi2025towards}. Prior work \citep{chan2024estimating} extends Bayesian UQ techniques to denoising diffusion probabilistic models (DDPMs) in generative image modeling by learning a distribution of weights for the diffusion model, enabling the estimation of epistemic uncertainty through the variance across the model's predictions. Similarly, other approaches \citep{berry2024shedding} train latent diffusion models (diffusion ensembles) for UQ by estimating the mutual information over a distribution of the models' weights, analogous to deep ensembles. However, these training-based UQ methods are challenging to implement, given that diffusion models often have billions of parameters, creating significant computation overhead during training or inference.
Drawing insights from black-box UQ methods for LLMs \citep{manakul2023selfcheckgpt, lin2023generating, becker2024cycles} which utilize similarity-based techniques for efficient UQ, PUNC \citep{franchi2025towards} explores uncertainty quantification of generative image models in language space. By mapping generated images into language form using a VLM, PUNC leverages widely-used text-based similarity metrics \citep{zhang2019bertscore, lin2004rouge} to estimate epistemic and aleatoric uncertainty of text-to-image models.
Although PUNC addresses the computation limitations of prior UQ methods for generative image models, PUNC is not applicable to video modes.
To our knowledge, this work is the first exploration of UQ for video world models.

\section{Problem Formulation}
\label{sec:problem}
We examine uncertainty quantification of black-box text-conditioned video generation models, which map a text prompt ${\ell \in \mcal{O}}$ to a video ${v \in \mcal{V}}$ via an unknown stochastic model ${f_\theta: \mcal{T} \mapsto \mcal{V}}$ parametrized by weights $\theta$. Specifically, the video generation process is described by the model:
\begin{equation}
    v \sim f_\theta(V \mid \ell),
    \label{eq:video_gen_process}
\end{equation}
where $v$ is sampled from the conditional distribution $f_\theta$.
For an input prompt $v$, the video generation model has a measure of doubt (uncertainty) associated with the sampled video output $v$. This uncertainty arises from a variety of sources, e.g., vagueness in the conditioning input $\ell$, randomness in the physical evolution of the real-world, limited training data, etc. In this work, we are interested in quantifying the \emph{total} predictive uncertainty associated with $v$, which can be broadly classified into two categories: \emph{aleatoric} uncertainty and \emph{epistemic} uncertainty.

\section{Uncertainty Quantification of Generative Video Models}
\label{sec:method}

\begin{figure}
    \centering
    \includegraphics[width=\linewidth]{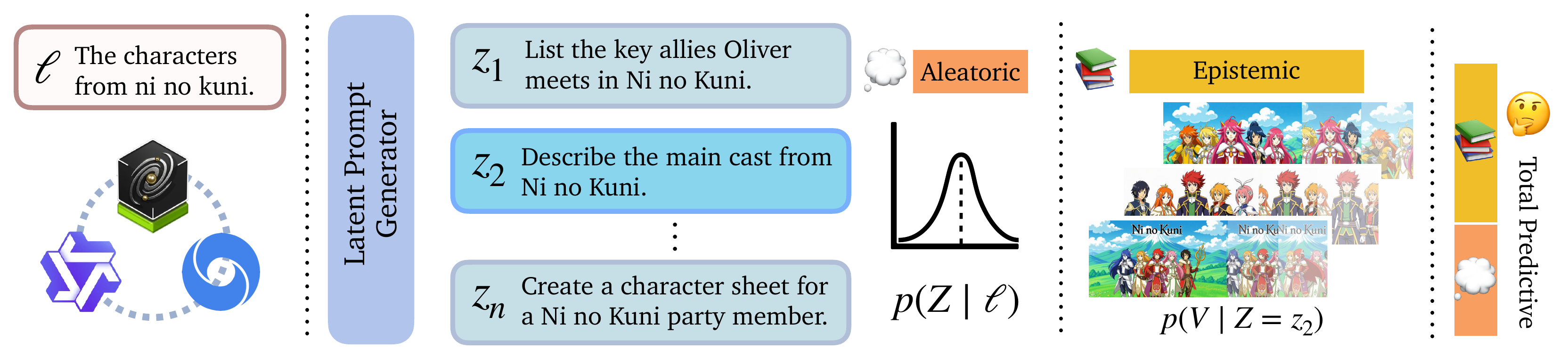}
    \caption{
    \textbf{\algname architecture.} Given a text prompt $\ell$, our goal is to quantify the uncertainty of the video generation model. We first generate $n$ latent prompts consistent with $\ell$ in line with the prompt refinement used by video models, modeling the aleatoric uncertainty as the entropy of the distribution over latent prompts. Then, for each latent prompt, we generate $m$ videos, modeling the epistemic uncertainty as the conditional entropy of the distribution over generated videos. Finally, aggregating the two types of uncertainties yields the total predictive uncertainty.
    }
    \label{fig:architecture}
\end{figure}

We present \algname, an efficient method for uncertainty quantification of video generation models, summarized in \Cref{fig:architecture}. Without loss of generality, we can decompose the video generation model in \Cref{eq:video_gen_process} using a latent variable ${z \in \mcal{Z}}$, modeling the video generation as a two-step process. In the first step, $z$ is sampled from the probability distribution ${p(Z \mid \ell)}$ conditioned on the input prompt $\ell$. In the second step, the video model samples the output video $v$ from the probability distribution ${p(V \mid Z = z)}$. Note that the application of latent variables is standard in generative modeling, e.g., in variational Bayesian learning \citep{kingma2013auto, sohn2015learning, bhattacharyya2019conditional}, enabling efficient learning and analysis of complex data-generation distributions.
Consequently, we can rewrite \Cref{eq:video_gen_process} in the form:
\begin{equation}
	\label{eq:video_generation_latent_model}
	\begin{aligned}
		f_{\theta}(V \mid \ell) = \underset{z \in \mcal{Z}}{\int} p(V \mid z, \ell)p(z \mid \ell) \der{z} &= \underset{z \in \mcal{Z}}{\int} p(V \mid z)p(z \mid \ell) \der{z},
	\end{aligned}
\end{equation}
where we assumed conditional independence of $V$ and $\ell$, given $z$.

Note that the video generation model described by \Cref{eq:video_generation_latent_model} is not limiting. In fact, state-of-the-art text-to-video models refine a user's prompt using an LLM to generate a much more detailed prompt that is passed into the video generation model.
Hence,
we can interpret \Cref{eq:video_generation_latent_model} as first sampling an instance of a fully-specified prompt $z$ from the conditional distribution defined by the input prompt $\ell$, e.g., given the input prompt ``a cat doing something," $z$ may be the more specific prompt ``a cat licking its paws before turning to the camera and meowing..." Subsequently, the video model generates the output video conditioned on $z$.

\begin{restatable}[Uncertainty Decomposition]{proposition}{propdecomp}
\label{prop:entropy_decomp}
Define the total predictive uncertainty in the output video as the differential entropy $h(V\mid\ell)$ of the distribution $f_{\theta}(V \mid \ell)$. Then, this quantity can be decomposed as:
\begin{equation}
    h(V\mid\ell) = h(V \mid Z) + h(Z \mid \ell),
    \label{eq:entropy_decomp}
\end{equation}

where $h(V \mid Z)$ represents the epistemic uncertainty in $v$, and $h(Z \mid \ell)$ the aleatoric uncertainty. 
\end{restatable}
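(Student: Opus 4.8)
The plan is to view $V$ and $Z$ as jointly distributed random variables under the conditional law induced by a fixed prompt $\ell$, and to recover the stated identity from the chain rule for differential entropy together with the conditional-independence structure already built into \Cref{eq:video_generation_latent_model}. The two-step process $z \sim p(Z \mid \ell)$ followed by $v \sim p(V \mid z)$ defines a joint density $p(v, z \mid \ell) = p(v \mid z, \ell)\, p(z \mid \ell)$, and the assumed conditional independence $p(v \mid z, \ell) = p(v \mid z)$ collapses the first factor. The argument is then essentially bookkeeping of the chain rule, with the only substantive content being how the differential entropy of the \emph{marginal} video distribution relates to the \emph{joint} entropy of the latent-video pair.

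First I would expand the joint differential entropy by conditioning on $Z$ first:
\begin{equation*}
  h(V, Z \mid \ell) = h(Z \mid \ell) + h(V \mid Z, \ell).
\end{equation*}
Invoking conditional independence gives $h(V \mid Z, \ell) = h(V \mid Z)$, so that $h(V, Z \mid \ell) = h(Z \mid \ell) + h(V \mid Z)$, which is precisely the right-hand side of \Cref{eq:entropy_decomp}. It then remains to identify this joint entropy with the declared total predictive uncertainty $h(V \mid \ell)$, i.e.\ the differential entropy of the marginal $f_\theta(V \mid \ell) = \int p(V \mid z)\, p(z \mid \ell)\,\der{z}$.

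The main obstacle is exactly this last identification. Expanding the same joint entropy in the opposite order yields $h(V, Z \mid \ell) = h(V \mid \ell) + h(Z \mid V, \ell)$, so the claimed decomposition holds as a strict equality only when the residual term $h(Z \mid V, \ell)$ vanishes — intuitively, when the latent (refined) prompt $z$ is almost surely recoverable from the generated video $v$. I would therefore discharge this step in one of two ways: either by adopting recoverability of $z$ from $v$ as an explicit modeling assumption, so that $h(Z \mid V, \ell) = 0$ and the marginal and joint entropies coincide; or by defining the total predictive uncertainty to be the joint entropy $h(V, Z \mid \ell)$ of the latent-augmented generative process, in which case the identity is an unconditional consequence of the chain rule. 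Making this link between the marginal video entropy and the latent-video joint entropy precise is where the real care is needed, since the remaining chain-rule manipulations and the application of conditional independence are routine.
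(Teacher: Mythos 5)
Your chain-rule argument is correct, and your hesitation over identifying the marginal entropy $h(V \mid \ell)$ with the joint entropy $h(V, Z \mid \ell)$ is not excess caution --- it pinpoints a genuine flaw in the paper's own proof. The paper starts from $h(V \mid \ell) = -\int_{v} f_{\theta}(V \mid \ell) \log f_{\theta}(V \mid \ell)\, \mathrm{d}v$ and then substitutes the mixture $f_{\theta}(V \mid \ell) = \int_{z} p(V \mid z)\, p(z \mid \ell)\, \mathrm{d}z$ both outside \emph{and inside} the logarithm, i.e., it implicitly uses $\log \int_{z} p(V \mid z) p(z \mid \ell)\, \mathrm{d}z = \int_{z} \log\bigl(p(V \mid z) p(z \mid \ell)\bigr)\, \mathrm{d}z$, which is false in general (the logarithm of an integral is not the integral of the logarithm). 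The quantity the paper manipulates from that line onward is exactly the joint entropy $h(V, Z \mid \ell)$, since $p(v, z \mid \ell) = p(v \mid z)\, p(z \mid \ell)$, and the remainder of its proof is precisely your chain-rule bookkeeping: $h(V, Z \mid \ell) = h(Z \mid \ell) + h(V \mid Z, \ell) = h(Z \mid \ell) + h(V \mid Z)$. So the sound portion of the paper's argument coincides with yours, and your residual term $h(Z \mid V, \ell)$ is exactly what the invalid substitution silently discards: the unconditional identity is $h(V \mid \ell) = h(V \mid Z) + h(Z \mid \ell) - h(Z \mid V, \ell)$, equivalently $h(V \mid \ell) = h(V \mid Z, \ell) + I(V; Z \mid \ell)$, so that with the marginal-entropy definition of total uncertainty the epistemic term should really be the mutual information $I(V; Z \mid \ell)$ rather than $h(Z \mid \ell)$. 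Of your two proposed repairs, the second (defining total predictive uncertainty as the joint entropy of the latent-augmented process) is the one that makes the proposition true as stated, and it is, in effect, what the paper computes without acknowledging it.

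One caution on your first repair: for \emph{differential} entropy, recoverability of $z$ from $v$ does not yield $h(Z \mid V, \ell) = 0$ but $h(Z \mid V, \ell) = -\infty$, since the conditional law degenerates to a point mass; the discrete intuition does not transfer. If you want the marginal-entropy version of the statement, you would have to posit $h(Z \mid V, \ell) = 0$ as a bare normalization assumption, which has no natural modeling interpretation in the continuous setting --- another reason to prefer the joint-entropy (or mutual-information) formulation.
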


This is a standard decomposition. We provide the proof in \Cref{app:proofs} for completeness. In the rest of this section, we introduce our approach to estimating these components.

\subsection{Aleatoric Uncertainty}
\label{ssec:method_aleatoric_uncertainty}
Aleatoric uncertainty encompasses irreducible randomness from the vagueness (lack of sufficient specificity) of the conditioning inputs, e.g., ``generate a video of a cat doing something."
In video generation, vagueness in the input prompt increases the randomness of the conditional probability distribution $p(Z \mid \ell)$, which is represented by the second term $h(Z \mid \ell)$ in \Cref{eq:entropy_decomp}.
Note that $h(Z \mid \ell)$ is independent of $v$ since the source of uncertainty arises from the input prompt independent of the second stage of the video generation, e.g., the denoising process in video diffusion models.
In particular, randomness in $Z$ cannot be reduced by training the video model on additional data under the assumption that we can model $p(Z \mid \ell)$ \emph{almost} exactly.

As a measure of aleatoric uncertainty, we would expect $h(Z \mid \ell)$ to be positively correlated with the vagueness of the input prompt.
For example, consider two input prompts: ${\ell_{1} = \text{``a cat napping''}}$ and ${\ell_{2} = \text{``a cat doing something''}}$. 
With $\ell_{1}$, the pdf of $p(Z \mid \ell_{1})$ will be concentrated on the set:
\begin{equation}
	\mcal{A}(\ell_{1}) = \{\text{``a black cat napping'', ``a cat napping on a couch'', ``a cat snoring on a couch'',} \dots\}.
\end{equation}
However, with $\ell_{2}$, the pdf of $p(Z \mid \ell_{2})$ will be concentrated on the set:
\begin{equation}
	\mcal{A}(\ell_{2}) = \{\text{``a black cat jumping'', ``a cat eating on a couch'', ``a cat meowing next to a door'',} \dots\}.
\end{equation}
Note that the elements of $\mcal{A}(\ell_{1})$ are more semantically-related (since $\ell_{1}$ is more specific) and are thus closer in the language (semantic) embedding space compared to elements in $\mcal{A}(\ell_{2})$. Hence, $p(Z \mid \ell_{1})$ will have a lower entropy relative to $p(Z \mid \ell_{2})$.

\p{Modeling the conditional latent distribution}
To compute $h(Z \mid \ell)$, we need to define a class of probability distributions that describe the latent-generation process. In this work, we model $p(Z \mid \ell)$ in a language embedding space using the Von-Mises Fisher (VMF) distribution \citep{fisher1953dispersion, jupp1989unified}, drawing insights from prior work \citep{robertson2004understanding, banerjee2005clustering, gopal2014mises}. 

The Von-Mises Fisher (VMF) distribution describes a $n$-dimensional probability distribution on the $(n - 1)$-sphere over unit vectors embedded in $\mbb{R}^{n}$, with the probability density function (pdf):
\begin{equation}
	\label{eq:von_mises_pdf}
	f_{n}(x, \mu, \kappa) = C_{n}(\kappa) \exp(\kappa \mu^{\T}x),
\end{equation}
with parameters $\mu$ and $\kappa$ denoting the mean direction and concentration parameters, where:
\begin{equation}
	\label{eq:vmf_normalization_constant}
	C_{n}(\kappa) = \frac{\kappa^{n / 2 - 1}}{(2 \pi)^{n / 2} I _{n / 2 - 1}(\kappa)},
\end{equation}
with $I _{n / 2 - 1}$ representing the modified Bessel function of the first kind. The concentration parameter functions analogously to the inverse variance, providing a measure of the spread of the distribution.

We need samples from $p(Z \mid \ell)$ to fit the VMF distribution. Collecting such data is typically prohibitively expensive. To overcome this challenge, we leverage LLMs as cost-effective generative models of $p(Z \mid \ell)$, 
noting that video models generally use LLMs to refine prompts prior to generating videos.

Specifically, given an input prompt $\ell$, we generate $N$ \emph{compatible}-but-more-specific prompts from an LLM. %
A generated prompt is \emph{compatible} with the input prompt if the generated prompt is consistent with, i.e., \emph{entails}, the input prompt. However, the converse need not be true: the input prompt might be underspecified. Subsequently, we compute language embeddings from an embedding model, e.g., SentenceFormer \citep{reimers2019sentence}. Although we could directly fit a VMF to the language embeddings, we project the language embeddings to a lower-dimensional subspace $\mbb{R}^{n}$ using principal component analysis (PCA) %
to avoid numerical instability associated with high-dimensional spaces. 
We estimate the parameters $\mu$ and $\kappa$ of the VMF distribution in closed-form using approximate methods \citep{jupp1989unified, sra2012short}, circumventing iterative optimization methods.

\p{Estimating Aleatoric Uncertainty}
Given $p(Z \mid \ell)$, we can compute the aleatoric uncertainty $h(Z \mid \ell)$ of $v$ in closed-form via:
\begin{equation}
	\label{eq:diff_entropy_aleatoric_unc}
	\begin{aligned}
		h(Z \mid \ell) 
        = -\log(C_{n}(\kappa)) - \frac{\kappa}{\mu_{z \mid \ell}} \Expect_{Z}[Z \mid \ell] (\kappa), \\
	\end{aligned}
\end{equation}
where ${Z \sim \mathrm{VMF}(\mu, \kappa)}$ and $C_{n}$ represents the normalization constant given by \Cref{eq:vmf_normalization_constant}. The expected value of the VMF is given by ${\Expect_{Z}[Z \mid \ell](\kappa) = W_{n}(\kappa)\mu_{z \mid \ell}}$, where ${W_{n} = \frac{I_{n / 2}(\kappa)}{I_{n / 2 - 1}(\kappa)}}$ with the modified Bessel function of the first kind $I_{n / 2}$. 
 We summarize the method for computing aleatoric uncertainty in \Cref{alg:aleatoric_uncertainty}. %

\begin{algorithm2e}[th]
     \caption{\algname: Aleatoric Uncertainty Quantification of Generative Video Models}
     \label{alg:aleatoric_uncertainty}
     \SetKwProg{AleatoricUncertainty}{AleatoricUncertainty}{:}{}
          
     \AleatoricUncertainty{$(f, \ell)$}{%
	     \KwIn{Video Model $f$, Input Prompt $\ell$\;}
	     \KwOut{Aleatoric Uncertainty $h(Z \mid \ell)$\;}
	     
	     $\mcal{A}(\ell) \gets \mathrm{Embed}(\texttt{LLM}(\ell))$\ \tcp*[r]{Construct $\mcal{A}(\ell)$ from an LLM/VLM}
	     
	     ${\mu_{z \mid \ell}, \kappa_{z \mid \ell} \gets \mathrm{VFit}(\mcal{A}(\ell))}$\ \tcp*[r]{Estimate $p(Z \mid \ell)$ with a VMF}
	     
	     ${h(Z \mid \ell) \gets } \Cref{eq:diff_entropy_aleatoric_unc}$\ \tcp*[r]{Compute aleatoric uncertainty $h(Z \mid \ell)$}
	     
	     \KwRet{$h(Z \mid \ell)$}\;
      }
 \end{algorithm2e}

\subsection{Epistemic Uncertainty}
Epistemic uncertainty represents the measure of doubt associated with a lack of knowledge, which generally results from insufficient training data (e.g., \Cref{fig:anchor}). As a result, epistemic uncertainty is \emph{reducible} by providing additional training data to the model. In \Cref{eq:entropy_decomp}, $h(V \mid Z)$ represents the epistemic uncertainty of the generated video $v$, where the uncertainty arises from the limited knowledge of the video model about concepts expressed by the latent variable ${z \in \mcal{Z}}$. 

For example, consider a video model trained entirely on internet videos of cats and dogs performing different activities, e.g., running, eating, jumping, meowing/barking. Now, when asked to generate a video of ``a lion roaring in the wild'', the video model might generate different videos across different runs, with some showing a large cat meowing in a park with significant tree canopy, others showing a cat making \emph{barking-like} sounds in a forest, etc. Although the generated videos are all conditioned on semantically-consistent latent variables, the generated videos might be semantically-inconsistent, since the video model has not been trained on videos of lions. This uncertainty in the generated videos can be described as \emph{epistemic} and is captured by the entropy term $h(V \mid Z)$.

\p{Estimating Epistemic Uncertainty}
Note that we can express $h(V \mid Z)$ in the form:
\begin{equation}
	\label{eq:diff_entropy_epistemic_unc}
	h(V \mid Z) = \Expect_{z \sim p(z \mid \ell)}[h(V \mid Z = z)],
\end{equation}
which can be interpreted as the expected entropy of the distribution of generated videos conditioned on sampled latent states $z$ from the conditional distribution ${p(z \mid \ell)}$. Computing $h(V \mid Z)$ is challenging for two reasons: (i) we do not have an explicit model of $p(V \mid Z = z)$ which is required to compute $h(V \mid Z = z)$, and (ii) even with an analytical expression for $p(V \mid Z = z)$, computing $h(V \mid Z)$ would require evaluating a double integral, which is intractable in general. %

To address the first challenge, we approximate the conditional distribution $p(V \mid Z = z)$ using a VMF distribution with the parameters $\mu$ and $\kappa$ estimated from samples drawn from the video model. Likewise, we approximate the expectation in \Cref{eq:diff_entropy_epistemic_unc} using Monte-Carlo sampling to address the second challenge, which we describe in greater detail.

First, we sample a set of latent variables $\mcal{E}_{z \mid \ell}$ conditioned on the input prompt $\ell$ from the distribution $p(Z \mid \ell)$, with each ${z \in \mcal{E}_{z|\ell}}$ representing specific instances of prompts entailing the input prompt. For each $z$, we estimate the distribution $p(V \mid Z = z)$ by generating a set of videos $\mcal{E}_{v \mid z}$ from the video model, conditioned on $z$. Subsequently, we embed these videos with a video embedding model, e.g., S3D \citep{miech2020end} and fit a VMF to the samples in $\mcal{E}_{v \mid z}$. Afterwards, we compute the entropy $h(V \mid Z = z)$ with:
\begin{equation}
	\label{eq:diff_entropy_epistemic_unc_cond_on_z}
	\begin{aligned}
		h(V \mid z) = -\log(C_{n}(\kappa_{v \mid z})) - \frac{\kappa_{v \mid z}}{\mu_{v \mid z}} \Expect_{v \mid z}[V \mid Z = z] (\kappa_{v \mid z}),
	\end{aligned}
\end{equation}
using the estimated VMF parameters $\mu_{v \mid z}$ and $\kappa_{v \mid z}$.
Finally, we compute an empirical estimate of the expectation of $h(V \mid Z = z)$ over z sampled from $p(Z \mid \ell)$.
We outline these steps in \Cref{alg:epistemic_uncertainty}. 

\begin{algorithm2e}[th]
     \caption{\algname: Epistemic Uncertainty Quantification of Generative Video Models}
     \label{alg:epistemic_uncertainty}
     \SetKwProg{EpistemicUncertainty}{EpistemicUncertainty}{:}{}
     
     \EpistemicUncertainty{$(f, \ell)$}{%
	     \KwIn{Video Model $f$, Input Prompt $\ell$\;}
	     \KwOut{Epistemic Uncertainty $h(V \mid z)$\;}
	     
	     $\mcal{E}_{z \mid \ell} \gets \mathrm{Embed}(\texttt{LLM}(\ell))$\ \tcp*[r]{Construct $\mcal{E}_{z \mid \ell}$ from an LLM/VLM}
	     
	     \ForEach{$z \in \mcal{E}_{z \mid \ell}$}{%
		      $\mcal{E}_{v \mid z} \gets \mathrm{Embed}(f(V \mid z))$\ \tcp*[r]{Construct $\mcal{E}_{v \mid z}$ from $f$}
		      
		      ${\mu_{v \mid z}, \kappa_{v \mid z} \gets \mathrm{VFit}(\mcal{E}_{v \mid z})}$\ \tcp*[r]{Estimate $p(V \mid Z = z)$ from $f$}
		      
			  ${h(V \mid Z = z) \gets \Cref{eq:diff_entropy_epistemic_unc_cond_on_z}}$\ \tcp*[r]{Compute entropy $h(V \mid Z = z)$}
	    }
	     
	     ${h(V \mid Z) \gets \Cref{eq:diff_entropy_epistemic_unc}}$\ \tcp*[r]{Compute epistemic uncertainty $h(V \mid Z)$}
	     
	     \KwRet{$h(V \mid Z)$}\;
     }
 \end{algorithm2e}

\section{Experiments}
\label{sec:evaluation}
We examine the effectiveness of \algname in uncertainty quantification of generative video models, specifically exploring the following questions: 
(i) \emph{How do we evaluate uncertainty calibration of video models?}
(ii) \emph{Are the total predictive uncertainty estimates computed by \algname calibrated?}
(iii) \emph{Can \algname effectively estimate both aleatoric and epistemic uncertainty?}

\subsection{Evaluation Setup}
\label{subsec: eval_setup}
We describe the datasets, models, and metrics used in evaluating our proposed method.

\p{Datasets}
We evaluate \algname on two large-scale video generation datasets, VidGen-1M~\citep{tan2024vidgen} and Panda-70M~\citep{chen2024panda}. Using GPT-5-nano~\citep{OpenAI2025gpt5nano}, we classify the videos in each dataset into five broad categories: animals, food, games, people, and other, a standard approach with video datasets. We subsample about $200$ video generation tasks uniformly from each category for evaluation. To address issues with missing video data/metadata in some of the datasets, we sample additional videos from other categories, minimally changing the uniformity of the evaluation dataset.

\p{Implementation}
We evaluate \algname on the Cosmos-Predict2 video model~\citep{nvidia-cosmos_cosmos-predict2} using the official implementation, which utilizes a text-to-image-to-video pipeline for text conditioning that generates an image from a text prompt, which is used as input to an image-to-video model. 
Although we explored alternative generative video models, e.g., Veo3~\citep{deepmind_veo3_techreport}, none were compatible with our experiments, either due to limitations on the number of permissible generation requests or prohibitive compute cost. For example, Veo3 only supports generating between 5-20 videos per month, which is insufficient for the scale of our evaluations.
We implement our proposed method by sampling $10$ latent states, ${z_{1:10}\sim p(Z|\ell)}$, and subsequently $10$ generated videos per latent state, ${v^i_{1:10}\sim p(v^i_j|Z=z_i)}$.

\subsection{How do we evaluate uncertainty calibration of video models?}
\label{ssec:calib_unc_metric}
Uncertainty calibration of video generation models has been underexplored, evidenced by the lack of purpose-specific calibration metrics. Widely-used calibration metrics, such as the expected calibration error (ECE) and maximum calibration metrics (MCE) apply 
only to evaluation settings with discrete ground-truth answers and errors, e.g., with  multiple-choice questions, making them unsuitable in video generation tasks with real-valued task errors. Consequently, we propose appropriate metrics for evaluating the calibration of the uncertainty estimates of video models. Specifically, we examine the Kendall rank correlation (Kendall's $\tau$) \citep{kendall1938new} between the video model's uncertainty estimates and an applicable accuracy metric, which captures the degree of monotonicity between uncertainty and accuracy. We do not utilize Pearson's rank correlation~\citep{galton1895note} due to its assumptions of linearity and normally-distributed data and likewise do not use the Spearman's rank correlation coefficient~\citep{spearman1987proof} due to its high sensitivity to outliers. 

\begin{figure}[th]
    \centering
    \includegraphics[width=\linewidth]{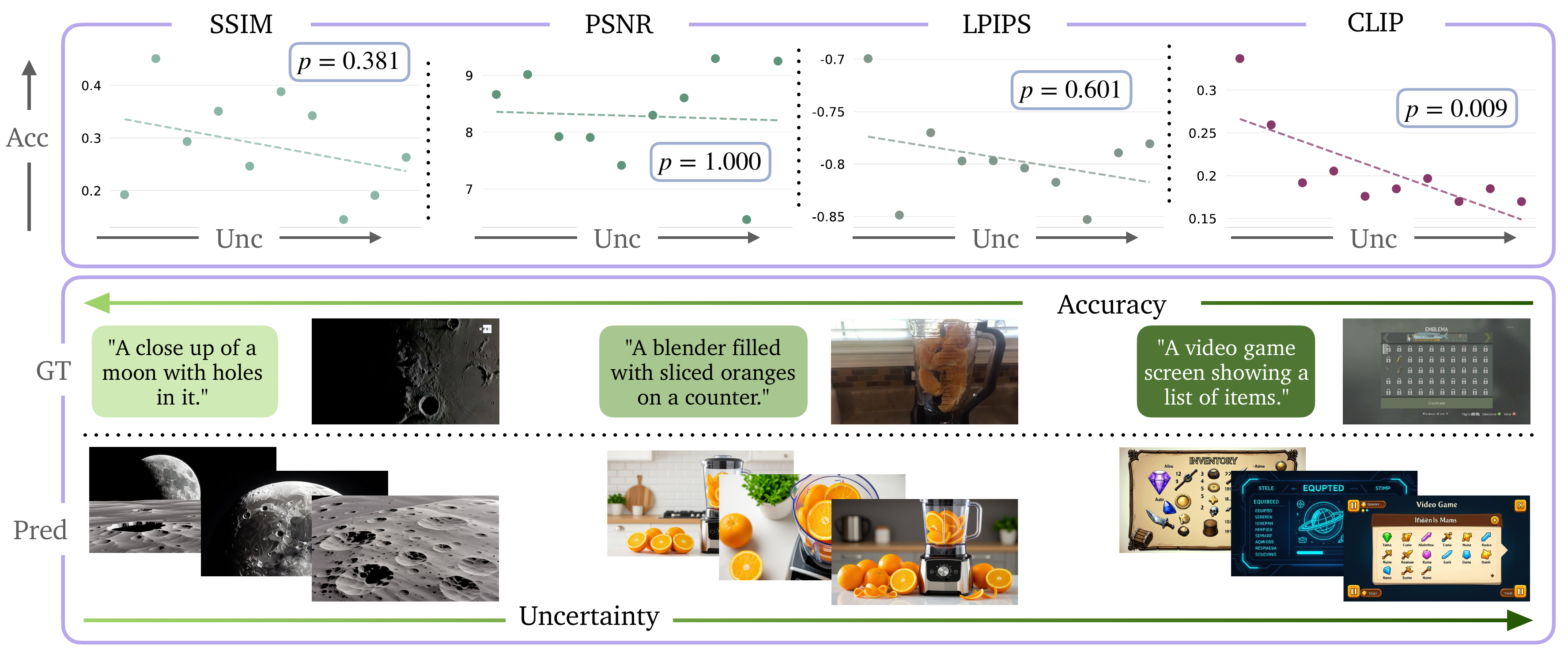}
    \caption{\textbf{Calibration Metrics for Video Models.}
    \textit{Top}: We examine the statistical significance of the Kendall rank correlation between uncertainty and widely-used perceptual metrics. We find that the CLIP cosine similarity score provides the most significant correlation. \textit{Bottom}: With the CLIP accuracy metric, we observe that low human-annotated uncertainty corresponds to smaller variance in the generated videos and greater accuracy with respect to the ground-truth video. As uncertainty increases, video prediction accuracy decreases.
    }
    \label{fig:calib_metric}
\end{figure}

To compute the the rank correlation coefficient, we use the SSIM, PSNR, LPIPS, and CLIP score metrics.
To identify the best metric for assessing calibration, we select 10 generation tasks from the Panda-70M datasets and rank the tasks in order of increasing uncertainty based on the vagueness of the text prompt for the task. Note that the vagueness in the prompt directly corresponds to aleatoric uncertainty, making it an effective proxy measure.
Given the human-annotated rankings, we compute the Kendall rank correlation between uncertainty and each accuracy metric along with a $p$-value, which provides a measure of the statistical significance of the correlation.
While Panda-70M dataset consists of tasks with a broad range of descriptive detail from vague to very specific, VidGen-1M consists of relatively well-detailed tasks. As a result, we do not sample from VidGen-1M, given the less observable variation in the aleatoric uncertainty.
We sample the tasks from Panda-70M dataset to retain the distribution of instruction detail.

We summarize our results in \Cref{fig:calib_metric}.
In Panda-70M, the CLIP score metric is strongly negatively correlated with uncertainty at the $99\%$ significance level. In contrast, the other perceptual metrics lack a statistically significant correlation with uncertainty. This finding is not entirely surprising, since CLIP captures semantic information that better reflects the accuracy of the generation task, unlike the other perceptual metrics which are more susceptible to differences in visual changes. 

Moreover, we visualize the text prompt, ground-truth video, and the first frame of the generated videos for a few tasks in \Cref{fig:calib_metric}, ranging from low to high uncertainty (rank). We observe that when uncertainty is low, the model tends to generate very similar videos, which are also close to the ground-truth, resulting in high accuracy with respect to the CLIP score. As we vary the uncertainty of the model, we observe greater variance in the generated videos accompanied by notably lower CLIP scores (compared to the other metrics), further demonstrating the utility of the CLIP score as an accuracy metric.

\subsection{Are our uncertainty estimates calibrated?}
\label{ssec:calib_unc_estimates}
We examine the calibration of our uncertainty estimates in VidGen-1M and Panda-70M, using the CLIP score accuracy metric given its effectiveness in assessing calibration.
We first compute the total predictive uncertainty associated with each video task using \algname, and then evaluate the Kendall rank correlation. We define the accuracy of each task as the mean CLIP score across all generated videos for that task. 

\Cref{fig:calib_total_pred_unc} (left) presents results for Panda-70M. We observe a statistically significant negative correlation ($99\%$ confidence level) between the total uncertainty computed using \algname and the CLIP score, demonstrating calibration of the uncertainty estimates. The results highlight that as the uncertainty of the video model decreases, its accuracy increases. 
Likewise, in VidGen-1M, the total predictive uncertainty is negatively correlated with the CLIP score at the $89.9\%$ confidence level.
From \Cref{fig:calib_total_pred_unc}, we see that when the total predictive uncertainty estimates is small (``A"), the video model generates more accurate videos; in contrast, in tasks with high estimated uncertainty (``B"), the video model is less accurate.

\begin{figure}[th]
    \centering
    \includegraphics[width=\linewidth]{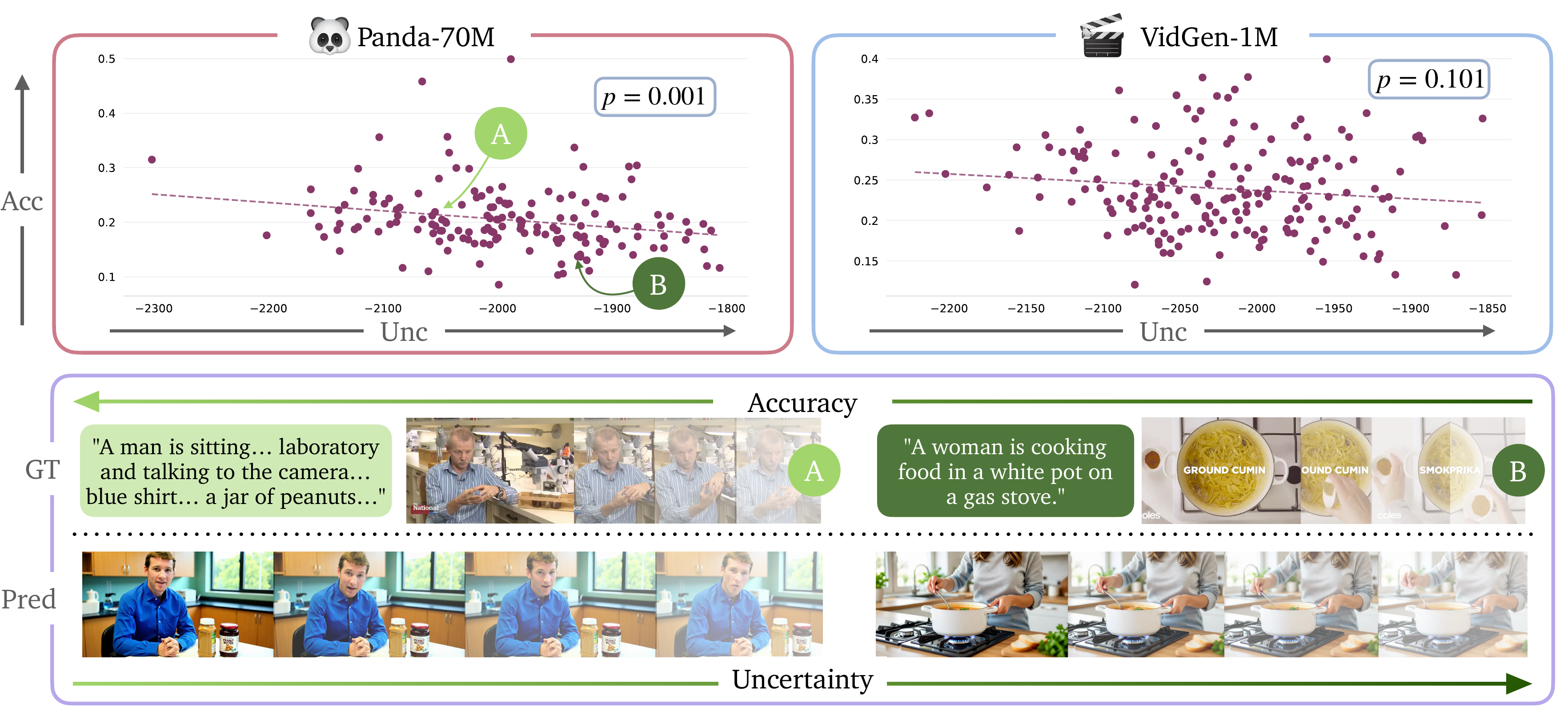}
    \caption{\textbf{Total Predictive Uncertainty for Video Models.}
    We assess the calibration of the total predictive uncertainty computed by \algname. \textit{Top}: correlation between video prediction accuracy and total uncertainty for Panda-70M and VidGen-1M . We observe a statistically significant correlation between accuracy and uncertainty for both datasets, signified by the small $p$-values. \textit{Bottom}: visualization of two samples from Panda-70M.
    }
    \label{fig:calib_total_pred_unc}
\end{figure}

\subsection{Can \algname effectively estimate both aleatoric and epistemic uncertainty?}
\label{ssec:calib_unc_decomposed}
We examine the performance of \algname in decomposing total uncertainty into aleatoric and epistemic uncertainty. 
To effectively assess calibration of aleatoric uncertainty, we consider a subset of each dataset where the epistemic uncertainty is almost zero and compute the rank correlation between the aleatoric uncertainty of these samples and the CLIP score. 
Likewise, to evaluate calibration of epistemic uncertainty, we compute the rank correlation between the epistemic uncertainty and the CLIP score for samples with relatively zero aleatoric uncertainty. In practice, we select samples with the lowest aleatoric or epistemic uncertainty, accordingly.

In \Cref{fig:decomposed_pred_unc}, we visualize the Kendall rank correlation between the aleatoric and epistemic uncertainty and the CLIP score in both datasets. 
In Panda-70M, we find that aleatoric and epistemic uncertainty are negatively correlated with accuracy at the $94.5\%$ and $98.3\%$ confidence level.
Similarly, in VidGen-1M, we observe a statistically significant negative correlation between aleatoric and epistemic uncertainty and the accuracy at the $92.3\%$ and $91.7\%$, respectively.
These results highlight that \algname can decompose total uncertainty effectively into its aleatoric and epistemic components

\begin{figure}[th]
    \centering
    \includegraphics[width=\linewidth]{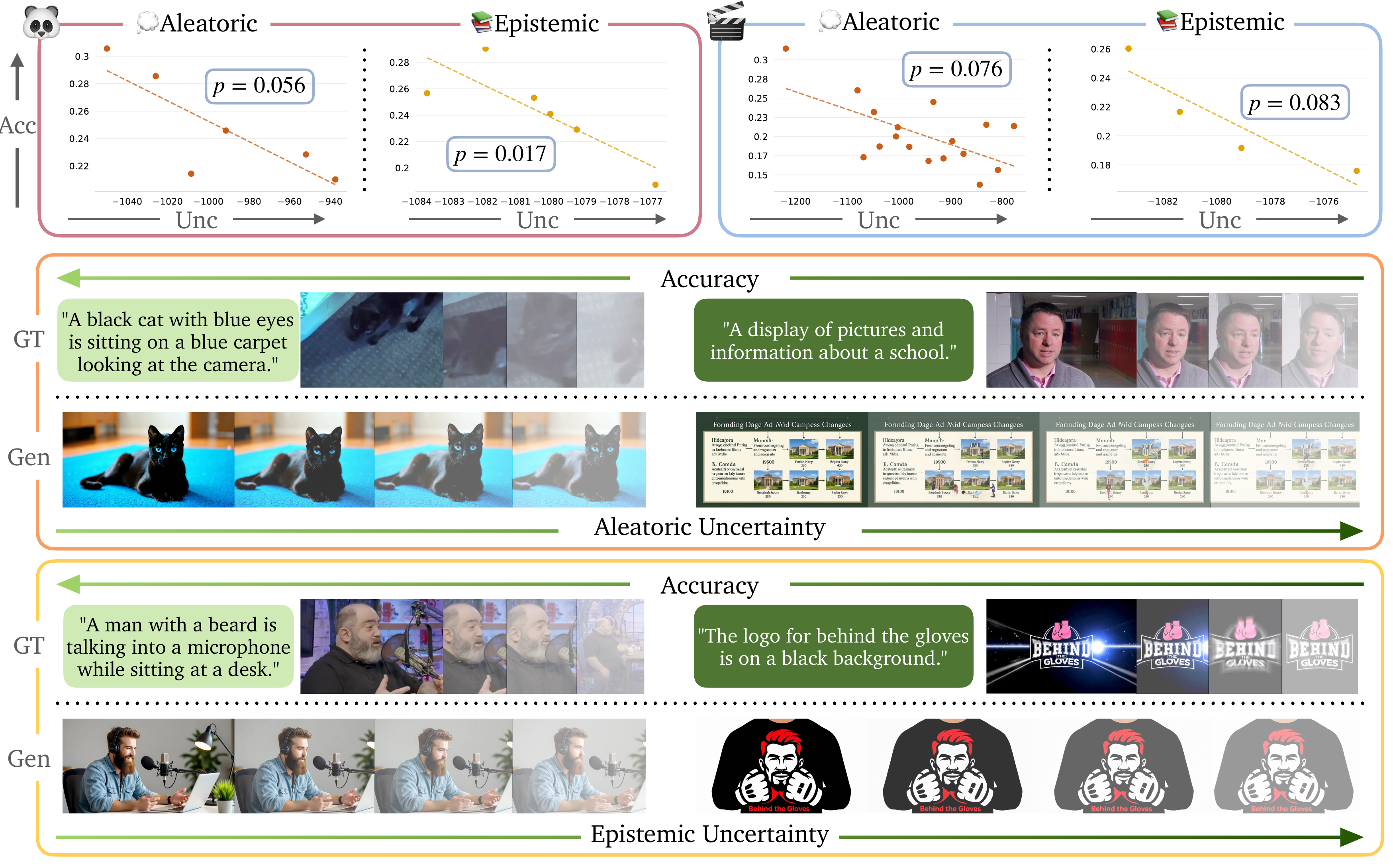}
    \caption{\textbf{Disentangling Aleatoric and Epistemic Uncertainty for Video Models.}
    We demonstrate the calibration of the aleatoric uncertainty estimates of \algname in tasks with no epistemic uncertainty, showing statistically significant negative correlation. We do the same for epistemic uncertainty.
    }
    \label{fig:decomposed_pred_unc}
\end{figure}

Further, we visualize text prompts, ground-truth-videos, and generated videos in tasks with low and high estimated aleatoric uncertainty. We observe that in the low-uncertainty case, the video model achieves high accuracy, unlike the high-uncertainty case, where the prediction accuracy is significantly lower. Similarly, we provide some visualizations in the case with low and high estimated epistemic uncertainty, showing the negative correlation between \algname's estimated epistemic uncertainty and video prediction accuracy. Notably, the model does not know the specific prompt ``Behind the Gloves" logo, unlike predicting the person in the human-centric videos.

\section{Conclusion}
\label{sec:conclusion}
We present a framework for empowering video models to express their uncertainty, a critical capability for safety. Concretely, we introduce a metric for measuring the calibration of UQ methods for video models and present a calibrated UQ method for video models. Our methods utilizes latent modeling to estimate both aleatoric and epistemic uncertainty, without making any limiting assumptions. Further, we provide an open-source video dataset for benchmarking UQ methods for video models.
Our experiments demonstrate the calibration of our proposed method and its effectiveness in disentangling aleatoric and epistemic uncertainty.

\section{Limitations and Future Work}
\label{sec:limitations_future_work}
\algname requires generating multiple videos from the video model to estimate epistemic uncertainty, which poses some computational
overhead. Future work will explore more efficient strategies for sampling videos from the video model, e.g., in the latent space of the video model. 
Beyond the two benchmark datasets considered in this work, we will explore extensions to new datasets to augment the UQ dataset curated for benchmarking calibration. In addition, future work will examine the application of our method to new open-source models, as they become available.

\section*{Acknowledgments}
The authors were partially supported by the NSF CAREER Award \#2044149, the Office of Naval Research (N00014-23-1-2148), and a Sloan Fellowship.

\bibliographystyle{style/plainnat}
\bibliography{references.bib}

\clearpage

\beginappendix{
    \section{Appendix}
\label{sec:appendix}

\subsection{Evaluation Setup}
We provide additional details on the evaluation setup.

\p{Metrics}
We consider the following standard video accuracy metrics: structural similarity index measure (SSIM) \citep{wang2004image}, peak signal-to-noise ration (PSNR), learned perceptual image patch similarity (LPIPS)~\citep{zhang2018unreasonable}, and CLIP cosine similarity score. Note that the SSIM, PSNR, and LPIPS primarily assess visual fidelity while the CLIP score captures more semantic information. We take the negative of the LPIPS score to transform it from an error metric to an accuracy metric.
To compute the perceptual metrics, we resize all videos spatially to the same dimensions and subsample the longer videos to ensure that all videos have the same duration.
For CLIP, we map both the ground-truth video $v^{\text{gt}}$ and all the generated videos $v^i_j$ to the visual-semantic space using CLIP.
We compute the mean of each metric over all generated videos per task, which represents the assigned value of the metric for that task. 

\subsection{Proofs}
\label{app:proofs}

\propdecomp*
\begin{proof}
  The entropy of a random variable quantifies its associated uncertainty. Given the probability distribution $f_\theta(V\mid\ell)$, we find its entropy by:
    \begin{align}
        h(V \mid \ell) &= -\underset{v \in \mcal{V}}{\int} f_{\theta}(V \mid \ell) \log(f_{\theta}(V \mid \ell))\ \der{v} \\
		&= -\underset{v \in \mcal{V}}{\int} \underset{z \in \mcal{Z}}{\int} p(V \mid z)p(z \mid \ell) \log(p(V \mid z)p(z \mid \ell))\ \der{z}\ \der{v},
    \end{align}
    where we incorporate the latent state generation step introduced in \Cref{eq:video_generation_latent_model}. We can then decompose the $\log$ terms into two components:
    \begin{align}
		h(V \mid \ell) = &-\underset{v \in \mcal{V}}{\int} \underset{z \in \mcal{Z}}{\int} p(V \mid z)p(z \mid \ell) \left(\log(p(V \mid z)) + \log(p(z \mid \ell))\right)\ \der{z}\ \der{v} \\
		 =  &-\left(\underset{z \in \mcal{Z}}{\int} p(z \mid \ell) \underset{v \in \mcal{V}}{\int}  p(V \mid z) \log(p(V \mid z)) \der{v} \der{z} \right)\nonumber\\
 	 & - \left(\underset{z \in \mcal{Z}}{\int} \left(\underset{v \in \mcal{V}}{\int} p(V \mid z)  \der{v}\right) p(z \mid \ell) \log(p(z \mid \ell)) \der{z} \right) ,\label{eq:entropy_decomp_iterative_integral}
     \end{align}
     where \Cref{eq:entropy_decomp_iterative_integral} applies the Fubini-Tonelli theorem. We note that each term of \Cref{eq:entropy_decomp_iterative_integral} is an entropy itself:
     \begin{align}
		 h(V \mid \ell) &=  -\left(\underset{z \in \mcal{Z}}{\int} p(z \mid \ell) h(V | Z = z) \der{z} \right)
 		 - \left(\underset{z \in \mcal{Z}}{\int} p(z \mid \ell) \log(p(z \mid \ell)) \der{z} \right) \\
 		 &= h(V \mid Z) + h(Z \mid \ell).
	\end{align}

    We recognize that the first term $h(V\mid Z)$ eliminates uncertainty in prompt ambiguity, and thus signifies the epistemic uncertainty in video generation. On the other hand, the second term $h(Z \mid\ell)$ is independent of the video model, but rather only depends on the vagueness of the input prompt, signifying aleatoric uncertainty.
\end{proof}
}

\end{document}